\documentclass[runningheads]{llncs}

\usepackage[misc]{ifsym}

\usepackage{amsmath,amssymb}
\usepackage{mathtools}
\usepackage{bm}
\usepackage{graphicx}
\usepackage{booktabs}
\usepackage{multirow}
\usepackage{algorithm}
\usepackage{algorithmic}
\usepackage{url}
\usepackage[hidelinks]{hyperref}
\usepackage{tikz}
\usepackage{makecell}
\usetikzlibrary{arrows.meta,positioning,calc}
\usepackage{wrapfig}
\usepackage{xcolor}

\newcommand{\R}{\mathbb{R}}
\newcommand{\I}{\mathbf{I}}

\newcommand{\normF}[1]{\left\lVert #1 \right\rVert_F}
\newcommand{\normtwo}[1]{\left\lVert #1 \right\rVert_2}

\newif\ifappendixonly
\newif\ifmainonly
\begin{document}

\title{Exact Federated Continual Unlearning for Ridge Heads on Frozen Foundation Models}
\titlerunning{Exact Federated Continual Unlearning for Ridge Heads}

\author{Yijun Quan\inst{1}\Letter \and Wentai Wu\inst{2} \and Giovanni Montana\inst{1}}
\authorrunning{Quan et al.}
\institute{WMG, University of Warwick, Coventry CV4 7AL, UK \and  Department of Computer Science, Jinan University, Guangzhou 510632, China\\ 
\email{\{yijun.quan, g.montana\}@warwick.ac.uk} \\ \email{wentaiwu@jnu.edu.cn}}

\toctitle{Exact Federated Continual Unlearning For Ridge Heads on Frozen Foundation Models}
\tocauthor{Yijun Quan, Wentai Wu, Giovanni Montana}
\ifappendixonly
\else
\maketitle
\begin{abstract}
Foundation models are commonly deployed as frozen feature extractors with a small trainable head to adapt to private, user-generated data in federated settings. The ``right to be forgotten'' requires removing the influence of specific samples or users from the trained model on demand. Existing federated unlearning methods target general deep models and rely on approximate reconstruction or selective retraining, making exactness costly or elusive. We study this problem in a practically relevant but under-explored regime: a frozen foundation model with a ridge-regression head. The exact optimum depends on the data only through two additive sufficient statistics, which we turn into a communication protocol supporting an arbitrary stream of \emph{add} and \emph{delete} requests via fixed-size messages. The server maintains a head that is, in exact arithmetic, \emph{pointwise identical} to centralized retraining after every request. We provide deterministic retrain-equivalence guarantees, order and partition invariance, two server-side variants, and a Bayesian certificate of zero KL divergence. Experiments on four benchmarks confirm the guarantees: both variants match centralized ridge retraining to within $10^{-9}$ relative Frobenius error and complete each request at orders-of-magnitude lower cost than federated retraining baselines.
\keywords{Machine unlearning \and Exact Unlearning \and Federated learning
\and Continual learning \and Foundation models \and Ridge regression}
\end{abstract}


\section{Introduction}

Large pretrained models are increasingly used as \emph{frozen} backbones for
downstream adaptation: the backbone $\phi(\cdot)$ remains fixed, while a lightweight
head is trained using private data from an organization or from end users.
This practice is particularly attractive in privacy-sensitive federated settings where
raw data cannot be centralized: clients compute representations locally and share only
head-related information with a server.
The pattern is now dominant in regulated domains such as healthcare, finance, and
legal services, where foundation models serve as shared feature extractors and
per-institution or per-user heads are trained on sensitive local data.

At the same time, regulatory and operational requirements (e.g., GDPR's right to
erasure, user revocation, data corrections, toxicity removal) demand that deployed
models support \emph{data deletion}. This need has given rise to the field of machine
unlearning~\cite{bourtoule2021machine,cao2015towards}, which studies how to remove
the influence of specific training examples or users from learned models without
naively retraining from scratch. Deleting raw records is insufficient: if records
influenced the learned parameters, their effect must be excised from the model itself.
In centralized learning, the gold standard for \emph{exact} unlearning is to retrain
from scratch on the retained data, yielding parameters \emph{identical} to those
obtained had the deleted data never been used. In federated learning (FL), however,
such exact retraining is prohibitive: it requires coordinating many clients and
repeating many communication rounds. This tension motivates the study of
\emph{exact federated unlearning}, with methods proposed to cluster clients and limit
retraining to a smaller group~\cite{FedCIO}, enforce stochastic stability and
selectively retrain~\cite{FATS}, or engineer quantization-based exactness criteria
that reduce the need for retraining~\cite{exactfun}.
These approaches target general deep models trained by iterative optimization, where
exact parameter removal is genuinely difficult.
\emph{None of them exploit the analytic structure that arises when the backbone is
frozen}, leaving exact federated unlearning in this increasingly prevalent regime
unstudied.

This paper fills that gap. We focus on a deployment pattern that is uniquely amenable
to \emph{analytic} updates: a \emph{frozen} deterministic feature extractor and a
\emph{ridge} (least-squares with $\ell_2$ regularization) linear head. Ridge heads
are widely used either directly for classification and regression, or as a tractable
surrogate for fast adaptation; for instance, a ridge head on frozen ViT features
already yields strong few-shot image classification
performance~\cite{bar2024frozen}. Crucially, in this regime the exact optimum is
available in closed form and depends on the data only through two second-order
sufficient statistics: the feature Gram matrix and the feature--label moment.
As a consequence, \emph{federated continual unlearning}---supporting a stream of both
additions and deletions across clients---reduces to maintaining these statistics
exactly and updating them additively or subtractively. The result is a protocol that
is simultaneously exact, communication-efficient, and trivially supports continual
add/delete streams without any retraining.

\paragraph{Contributions.}
We make the following contributions:
\begin{itemize}
    \item We formalize \emph{federated continual unlearning} for frozen foundation
    models with a ridge head, covering both sample-level and client-level deletions,
    and state a deterministic retrain-equivalence goal.

    \item We derive a federated protocol in which clients transmit only fixed-size
    sufficient-statistic messages for every add/delete event; the server maintains
    global retained-set statistics and recovers the head \emph{exactly equal} to
    centralized retraining after each event, in a single communication round.

    \item We give two server-side implementations: a numerically robust \emph{exact
    solver} (Variant~A) that maintains statistics and solves an SPD system each round,
    and an \emph{incremental inverse tracker} (Variant~B) that applies
    Sherman--Morrison--Woodbury (SMW) updates, with feasibility conditions for exact
    downdates and practical reset rules.

    \item We prove deterministic exactness, order invariance (across clients, samples,
    and event interleavings), client-partition invariance, and equivalence of the two
    variants in exact arithmetic.

    \item Interpreting ridge regression as Bayesian linear regression, we show that
    the protocol maintains the \emph{exact posterior} after any deletion stream,
    implying zero KL divergence between the unlearned and retrained posteriors.

    \item We validate both variants experimentally on four benchmarks (CIFAR-10,
    CIFAR-100, FeMNIST, Sentiment140): both match centralized ridge retraining to
    within $10^{-9}$ relative Frobenius error (fp64) across varying client counts and
    non-IID partitions, and handle a stream of 200 single-point deletion requests at
    orders-of-magnitude lower wall-clock cost than FedAvg-based exact federated
    unlearning baselines.
\end{itemize}

Our results provide exact federated continual unlearning for the ridge head given
frozen features. They do \emph{not} address information stored inside a backbone
trained end-to-end on revocable data; backbone unlearning is orthogonal and remains
challenging in general.

\section{Related work}
We discuss three threads most relevant to this work: general machine unlearning,
exact federated unlearning, and analytic learning with frozen backbones.

\paragraph{Machine unlearning.}
The term \emph{machine unlearning} was introduced in early work on making systems
forget~\cite{cao2015towards}. The literature is commonly grouped into
\emph{approximate} and \emph{exact} unlearning.
Approximate unlearning, often coupled with certified removal, is studied via
stability and influence-style arguments
(e.g.,~\cite{guo2020certified,neel2021descent,izzo2021approximate}), and provides
probabilistic or bounded guarantees rather than strict equivalence to full retraining.
Exact unlearning aims to make the updated model indistinguishable from one retrained
from scratch on the modified dataset. SISA training~\cite{bourtoule2021machine} is
the canonical example: it partitions data into shards and retrains only affected
sub-models upon deletion, achieving retrain-equivalent behavior at the cost of
additional storage and some utility loss~\cite{koch2023no}.
Further works improve the accuracy--efficiency tradeoff within sharding-based
frameworks~\cite{yan2022arcane,dukler2023safe}, exploit analytical closed-form
updates~\cite{guo2020certified,tang2025acu,yang2025muso}, or tailor exact deletion
guarantees to knowledge distillation~\cite{quanefficient} and federated learning.
Our work belongs to the exact unlearning category and exploits closed-form structure,
but targets the federated continual setting with both additions and deletions.

\paragraph{Exact federated unlearning.}
Exact federated unlearning inherits the challenges of both FL and exact unlearning:
non-IID data, limited server access, and communication bottlenecks, together with the
requirement that every update be equivalent to retraining on the retained data.
Since exact unlearning typically requires partial network retraining, the literature
focuses on reducing how often and how much retraining is needed.
FedCIO~\cite{FedCIO} clusters clients by importance and restricts retraining to a
selected subset, localizing the unlearning cost.
Exact-Fun~\cite{exactfun} reduces retraining probability via quantization-based
exactness criteria.
FATS~\cite{FATS} designs TV-stable FL algorithms so that retraining is needed only
with bounded probability over a stream of requests.
All three substantially reduce retraining frequency, but partial or full retraining
is still occasionally triggered and remains costly in federated settings.
By contrast, our protocol requires \emph{no retraining at all}: each unlearning
request is handled in a single communication round via a closed-form update,
which is possible precisely because the backbone is frozen.

\paragraph{Analytic learning with frozen backbones.}
A line of work in continual learning studies analytic heads on frozen features to
avoid gradient instability and achieve exact retrain equivalence in continual
\emph{learning} settings~\cite{zhuang2022acil,tang2025afcl}.
ACU~\cite{tang2025acu} extends this to continual \emph{unlearning} in a
\emph{centralized} setting, deriving exact ridge downdates via analytic inverse
tracking.
We build on the same algebraic foundation but depart in two key respects: we operate
in a \emph{federated} setting with $K$ clients, and we support a continual
\emph{add+delete} stream rather than deletions alone.
The analytic structure becomes a \emph{communication protocol}: clients transmit
fixed-size sufficient-statistic messages, and the server maintains exact global
retrain equivalence without ever accessing raw features.

\section{Setup and problem statement}

We consider a two-stage model: a frozen feature extractor $\phi:\mathcal{X}\to\R^d$
and a trainable linear head $W\in\R^{d\times c}$.
For a sample $(x_i,y_i)$, let $f_i \coloneqq \phi(x_i)\in\R^{d}$ denote its frozen
feature vector and $y_i\in\R^{c}$ its label (one-hot or soft for classification,
real-valued for regression).
For a dataset of $n$ samples, we collect features and labels into matrices
$F\in\R^{n\times d}$ and $Y\in\R^{n\times c}$, so the model predicts $\hat{Y}=FW$.
We assume every client applies the same frozen $\phi$ with identical preprocessing,
and that $\phi$ is deterministic at inference time (evaluation mode, no dropout);
if stochasticity is unavoidable, exactness is recovered by caching $f_i$ at add time
and reusing it at deletion time. An intercept can be included by augmenting
$f_i\leftarrow[f_i^\top\;1]^\top$.

We consider a server and $K$ clients. At time $t$, client $k$ holds a local retained
multiset $\mathcal{D}_{k,t}$. Between rounds $t-1$ and $t$, each client may
\emph{add} a batch $\mathcal{D}^+_{k,t}$ of new samples and/or \emph{delete} a batch
$\mathcal{D}^-_{k,t}\subseteq\mathcal{D}_{k,t-1}$ of previously retained samples.
Raw inputs $x$ are never sent to the server. We write
$\mathcal{D}_t \coloneqq \biguplus_{k=1}^K \mathcal{D}_{k,t}$ for the global
retained multiset and $(F_t,Y_t)$ for the corresponding feature and label matrices.

The objective is to maintain a sequence of heads $\{W_t\}_{t\ge 0}$, shared by all
clients and the server, such that for every $t$,
\begin{equation}
\label{eq:goal}
W_t = W^\star(F_t,Y_t),
\end{equation}
where $W^\star(F,Y)$ is the unique ridge-regression minimizer on $(F,Y)$.
In machine unlearning, ``exact'' typically means the unlearned model has the same
distribution as one retrained from scratch on the retained data. Since ridge
regression with $\gamma>0$ has a unique deterministic solution, distributional
equality reduces to pointwise equality of parameters, which is precisely
Equation~\eqref{eq:goal}.

\section{Ridge heads and sufficient statistics}

We use the matrix ridge objective
\begin{equation}
\label{eq:ridge}
\min_{W\in\R^{d\times c}} \;\; \normF{Y-FW}^2 + \gamma \normF{W}^2,\qquad \gamma>0.
\end{equation}
Setting the gradient to zero yields the closed-form minimizer
\begin{equation}
\label{eq:closedform}
W^\star = (S+\gamma \I)^{-1}G,
\end{equation}
where $S \coloneqq F^\top F \in \R^{d\times d}$ and $G \coloneqq F^\top Y \in
\R^{d\times c}$ are the sufficient statistics of the data.
\begin{equation}
\label{eq:stats}
S \coloneqq F^\top F,\qquad G \coloneqq F^\top Y.
\end{equation}
Training thus depends on the data only through the pair $(S,G)$.

A key property is that sufficient statistics are additive across disjoint subsets: if
a dataset decomposes into parts with statistics $(S_j,G_j)$, then the union has
$S=\sum_j S_j$ and $G=\sum_j G_j$, and deleting a subset simply subtracts its
contribution. This additivity is the algebraic foundation for the protocol in
Section~\ref{sec:protocol}. It also explains why second-order summaries are necessary:
the ridge solution depends on $S=\sum_i f_if_i^\top$, not merely on the first-order
mean $\sum_i f_i$, so no protocol relying only on first-order information can achieve
deterministic exactness under arbitrary adds and deletes
(Lemma~\ref{lem:need_second_order}, Section~\ref{sec:2nd_need}).

\section{Exact federated continual unlearning protocol}
\label{sec:protocol}

We treat $t=0$ as an empty initial state ($S_0=0$, $G_0=0$) and $t=1$ as the round
in which all initial training data is added. At each subsequent round $t\geq1$,
client $k$ forms feature and label matrices $(F^+_{k,t},Y^+_{k,t})$ for additions
and $(F^-_{k,t},Y^-_{k,t})$ for deletions, and transmits only the corresponding
sufficient statistics:
\[
S^{\pm}_{k,t}=(F^{\pm}_{k,t})^\top F^{\pm}_{k,t},\qquad
G^{\pm}_{k,t}=(F^{\pm}_{k,t})^\top Y^{\pm}_{k,t}.
\]
To form $(S^-_{k,t},G^-_{k,t})$ exactly, the client must recover the features of
deleted samples, either by recomputing $f=\phi(x)$ under the deterministic frozen
backbone, or by caching $f$ at add time and discarding $x$ later.
The server aggregates client messages:
\begin{equation}
\label{eq:agg}
S_t^{\pm} = \sum_{k=1}^K S^{\pm}_{k,t},\qquad
G_t^{\pm} = \sum_{k=1}^K G^{\pm}_{k,t},
\end{equation}
and maintains a retained-statistics ledger:
\begin{equation}
\label{eq:ledger}
S_t \leftarrow S_{t-1} + S_t^+ - S_t^-,
\qquad
G_t \leftarrow G_{t-1} + G_t^+ - G_t^-.
\end{equation}
The ridge head is then recovered as $W_t = H_t^{-1}G_t$ where
$H_t \coloneqq S_t + \gamma\I$.
The two variants below differ only in how the server computes this update.

\paragraph{Variant A: exact recomputation via SPD solves.}
Variant~A forms $H_t$ explicitly, computes its Cholesky factorization
$H_t=L_tL_t^\top$, and recovers $W_t$ via two triangular solves.
This avoids forming $H_t^{-1}$ explicitly and is numerically stable; it serves as
the recommended baseline.

\paragraph{Variant B: incremental inverse updates via SMW.}
When the per-round Gram changes admit a low-rank factorization, the server can update
the inverse analytically. Let $T=H^{-1}$ and write $\Delta S=U^\top U$ for
$U\in\R^{r\times d}$.

\paragraph{Add.}
With $H^+=H+\Delta S$,
\begin{equation}
\label{eq:smw_add}
T^{+} = T - T U^\top (\I_r + U T U^\top)^{-1} U T,
\end{equation}
and the updated head is
\begin{equation}
\label{eq:w_add}
W^+ = W - T^+U^\top(UW) + T^+\Delta G.
\end{equation}

\paragraph{Delete.}
With $H^-=H-\Delta S$, assuming $H^-\succ 0$,
\begin{equation}
\label{eq:smw_del}
T^{-} = T + T U^\top (\I_r - U T U^\top)^{-1} U T,
\end{equation}
and
\begin{equation}
\label{eq:w_del}
W^- = W + T^-U^\top(UW) - T^-\Delta G.
\end{equation}

\paragraph{Constructing $U$ via client QR factors.}
In practice, each client applies a thin QR factorization $F_{k,t}=Q_kR_k$, so that
$S_{k,t}=R_k^\top R_k$, and transmits only the upper-triangular factor $R_k$.
The server stacks these into $U=[R_1^\top,\dots,R_K^\top]^\top\in\R^{Kr\times d}$,
giving $U^\top U=\sum_k R_k^\top R_k=S_t^{\pm}$.
This avoids transmitting raw feature matrices and sidesteps ill-conditioned Cholesky
factorizations on $\Delta S$, which can occur when add/delete batches are small.

\paragraph{Feasibility and resets.}
For deletions the matrix $\I_r-UTU^\top$ must be SPD.
In finite precision, repeated downdates can accumulate drift; we recommend falling
back to Variant~A whenever the Cholesky factorization of $\I_r-UTU^\top$ fails or
becomes ill-conditioned.

The Algorithms~\ref{alg:variantA}--\ref{alg:variantB} summarize the full client--server protocol; Figure~\ref{fig:overview} gives a high-level overview. Algorithm~\ref{alg:variantA} specifies the combined client–server routine for Variant A, which aggregates sufficient statistics and recomputes the ridge head via SPD solves, while Algorithm~\ref{alg:variantB} gives the corresponding routine for Variant B, which instead applies Sherman–Morrison–Woodbury updates; both variants produce a head that is exact to centralized ridge retraining after every request.

\begin{algorithm}
\caption{Variant A at round $t$ (sufficient statistics and exact recomputation)}
\label{alg:variantA}
\begin{algorithmic}[1]
\STATE \textbf{Client $k$ inputs:} frozen feature extractor $\phi$,
       add batch $\mathcal{D}^+_{k,t}$, delete batch $\mathcal{D}^-_{k,t}$
\STATE Compute features for additions:
       $F^+_{k,t}\leftarrow [\phi(x)]_{(x,y)\in \mathcal{D}^+_{k,t}}$,
       $Y^+_{k,t}\leftarrow [y]$
\STATE Compute features for deletions:
       $F^-_{k,t}\leftarrow [\phi(x)]_{(x,y)\in \mathcal{D}^-_{k,t}}$ (or cached),
       $Y^-_{k,t}\leftarrow [y]$
\STATE Form sufficient statistics:
\STATE \hspace{0.8em}$S^+_{k,t}=(F^+_{k,t})^\top F^+_{k,t}$,\;\;
                     $G^+_{k,t}=(F^+_{k,t})^\top Y^+_{k,t}$
\STATE \hspace{0.8em}$S^-_{k,t}=(F^-_{k,t})^\top F^-_{k,t}$,\;\;
                     $G^-_{k,t}=(F^-_{k,t})^\top Y^-_{k,t}$
\STATE Client sends $(S^+_{k,t},G^+_{k,t},S^-_{k,t},G^-_{k,t})$ to server
       (optionally under secure aggregation).

\STATE \textbf{Server inputs:} previous $(S_{t-1},G_{t-1})$, regularizer $\gamma$
\STATE Aggregate client messages to obtain $(S^+_t,G^+_t,S^-_t,G^-_t)$ as in \eqref{eq:agg}
\STATE Update retained-statistics ledger:
       $S_t\leftarrow S_{t-1}+S^+_t-S^-_t$,\;
       $G_t\leftarrow G_{t-1}+G^+_t-G^-_t$
\STATE Form $H_t\leftarrow S_t+\gamma I$; compute Cholesky $H_t=L_tL_t^\top$
\STATE Solve $H_t W_t = G_t$ via triangular solves (no explicit inverse)
\STATE Broadcast $W_t$ for inference
\end{algorithmic}
\end{algorithm}

\begin{algorithm}
\caption{Variant B at round $t$ (SMW updates using client $R$ factors)}
\label{alg:variantB}
\begin{algorithmic}[1]
\STATE \textbf{Client $k$ inputs:} frozen feature extractor $\phi$, add batch
       $\mathcal{D}^+_{k,t}$, delete batch $\mathcal{D}^-_{k,t}$
\STATE Compute features for additions:
       $F^+_{k,t}\leftarrow [\phi(x)]_{(x,y)\in \mathcal{D}^+_{k,t}}$, \;
       $Y^+_{k,t}\leftarrow [y]$
\STATE Compute features for deletions:
       $F^-_{k,t}\leftarrow [\phi(x)]_{(x,y)\in \mathcal{D}^-_{k,t}}$ (or cached), \;
       $Y^-_{k,t}\leftarrow [y]$
\STATE Compute thin QR factorizations:
       $F^+_{k,t} = Q^+_{k,t} R^+_{k,t}$, \;
       $F^-_{k,t} = Q^-_{k,t} R^-_{k,t}$
\STATE Form local first-order summaries:
       $G^+_{k,t} = (F^+_{k,t})^\top Y^+_{k,t}$, \;
       $G^-_{k,t} = (F^-_{k,t})^\top Y^-_{k,t}$
\STATE Client sends $(R^+_{k,t},R^-_{k,t},G^+_{k,t},G^-_{k,t})$ to server.

\STATE \textbf{Server inputs:} previous $G_{t-1}$,
       inverse $T_{t-1} = (S_{t-1}+\gamma I)^{-1}$, current $W_{t-1}$
\STATE Aggregate client messages to obtain $R^+_t,R^-_t,G^+_t,G^-_t$
\STATE $\Delta G_t \leftarrow G^+_t - G^-_t$
\STATE Update retained-statistics ledger:
       $G_t \leftarrow G_{t-1} + \Delta G_t$
\STATE Let $U^+_t \leftarrow R_t^+$ and $U^-_t \leftarrow R_t^-$.
       Apply a Sherman--Morrison--Woodbury update for the additions.
       Note the weight formula uses $G_t^+$ (the additions-only label moment) together
       with the previous head $W_{t-1}$:
       \[
       \begin{aligned}
       T_t^+ &=
             T_{t-1} - T_{t-1} {U_t^{+}}^\top (I_r + U_t^+ T_{t-1} {U_t^+}^\top)^{-1}
             U_t^+ T_{t-1},\\
        W_t^+ &= (I-T_t^+{U_t^+}^\top U_t^+)W_{t-1}+T_t^+G_t^+,
       \end{aligned}
       \]
\STATE Then apply a second SMW update for deletions with $G_t^-$ (the deletions-only labbel moment) to obtain the final inverse and
       head:
       \[\begin{aligned}
       T_t &= T_t^+ + T_t^+ {U_t^-}^\top (I_r - U_t^- T_t^+ {U_t^-}^\top)^{-1}
              U_t^- T_t^+ \\
       W_t &= (I+T_t{U_t^-}^\top U_t^-)W_t^+ -T_tG_t^-,
       \end{aligned}\]
\STATE Broadcast $W_t$ for inference
\end{algorithmic}
\end{algorithm}

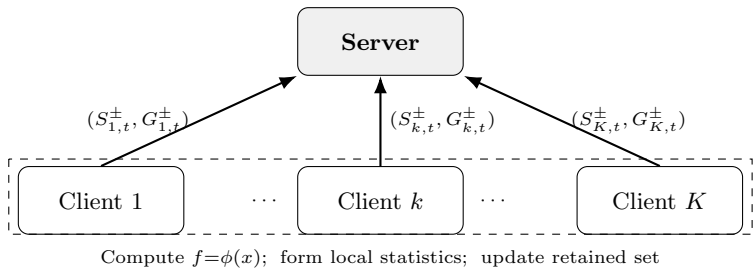
\begin{figure}[t]
\centering
\begin{tikzpicture}[node distance=1.2cm,>=Latex]

\node[draw, rounded corners, minimum width=2.2cm, minimum height=0.9cm,
      fill=gray!12] (srv) {\textbf{Server}};

\node[draw, rounded corners, minimum width=2.2cm, minimum height=0.9cm,
      below left=1.2cm and 1.5cm of srv] (c1) {Client $1$};
\node[draw, rounded corners, minimum width=2.2cm, minimum height=0.9cm,
      below=1.2cm of srv] (c2) {Client $k$};
\node[draw, rounded corners, minimum width=2.2cm, minimum height=0.9cm,
      below right=1.2cm and 1.5cm of srv] (c3) {Client $K$};

\node[left=0.1cm of c2]  {\scriptsize $\cdots$};
\node[right=0.1cm of c2] {\scriptsize $\cdots$};

\draw[->, thick] (c1.north) -- node[left]  {\scriptsize $(S^{\pm}_{1,t},G^{\pm}_{1,t})$} (srv.south west);
\draw[->, thick] (c2.north) -- node[right] {\scriptsize $(S^{\pm}_{k,t},G^{\pm}_{k,t})$} (srv.south);
\draw[->, thick] (c3.north) -- node[right] {\scriptsize $(S^{\pm}_{K,t},G^{\pm}_{K,t})$} (srv.south east);

\draw[dashed] ($(c1.west)+(-0.12,0.55)$) rectangle ($(c3.east)+(0.12,-0.45)$);
\node[align=center, below=0.08cm of c2] {\scriptsize
  Compute $f{=}\phi(x)$;\; form local statistics;\; update retained set};

\end{tikzpicture}
\caption{Protocol overview (Variant A). Clients compute frozen features locally
and transmit only fixed-size sufficient-statistic messages $(S^\pm_{k,t}, G^\pm_{k,t})$.
The server maintains a retained-statistics ledger
($S_t \leftarrow S_{t-1}{+}S_t^{+}{-}S_t^{-}$, $G_t \leftarrow G_{t-1}{+}G_t^{+}{-}G_t^{-}$)
and recovers the exact ridge head by solving $(S_t{+}\gamma I)W_t = G_t$.}
\label{fig:overview}
\end{figure}


\section{Theoretical properties}
\label{sec:theory}

We establish five properties of the protocol: (i) the server head is at all times
pointwise identical to centralized retraining on the retained data; (ii) the result
is invariant to the order of updates and the assignment of samples to clients;
(iii) exact downdates are feasible whenever the retained Gram matrix remains positive
definite after deletion; (iv) the protocol maintains the exact Bayesian posterior,
yielding a zero-KL unlearning certificate; and (v) communication and computation
scale with the rank of the per-round update rather than the size of the full dataset.
Throughout we assume $\gamma>0$ and that each deletion request is valid (the deleted
multiset is a subset of the retained set), so $S_t\succeq 0$ and
$H_t=S_t+\gamma\I\succ 0$ for all $t$.

\subsection{Deterministic retrain equivalence}

\begin{theorem}[Deterministic retrain equivalence]
\label{thm:det_exact}
Let $(S_t,G_t)$ be the exact sufficient statistics of the global retained dataset
$\mathcal{D}_t$. Then the server head $W_t=(S_t+\gamma\I)^{-1}G_t$ equals the
unique ridge-regression solution obtained by centralized retraining from scratch
on $\mathcal{D}_t$.
\end{theorem}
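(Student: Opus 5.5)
The plan is to reduce the claim to two facts: the ridge objective \eqref{eq:ridge} has a unique minimizer, and that minimizer depends on the retained data only through $(S_t,G_t)$. Fix $t$ and let $(F_t,Y_t)$ be the feature and label matrices of $\mathcal{D}_t$, with rows in any order. Define the objective $J(W)=\normF{Y_t-F_tW}^2+\gamma\normF{W}^2$.

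\emph{Step 1 (expanding the objective).} I will expand
\[
J(W)=\operatorname{tr}(Y_t^\top Y_t)-2\operatorname{tr}(W^\top F_t^\top Y_t)+\operatorname{tr}\bigl(W^\top (F_t^\top F_t+\gamma\I)W\bigr).
\]
Writing $S_t=F_t^\top F_t$ and $G_t=F_t^\top Y_t$ as in \eqref{eq:stats}, this becomes a quadratic in $W$ whose Hessian acts columnwise as $2(S_t+\gamma\I)$. Since $S_t\succeq 0$ (a Gram matrix) and $\gamma>0$, we have $H_t=S_t+\gamma\I\succ 0$. Hence $J$ is strictly convex and coercive, and so it has exactly one minimizer.

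\emph{Step 2 (identifying the minimizer).} The gradient is $\nabla J(W)=2(H_tW-G_t)$. Setting it to zero gives $H_tW=G_t$. Because $H_t$ is invertible, the unique stationary point, and therefore the unique global minimizer, is $W^\star(F_t,Y_t)=H_t^{-1}G_t$, as in \eqref{eq:closedform}. This is exactly the formula the server uses. The hypothesis that $(S_t,G_t)$ are the exact statistics of $\mathcal{D}_t$ then gives $W_t=W^\star(F_t,Y_t)$, which is \eqref{eq:goal}.

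\emph{Step 3 (the retrainer's choices do not matter).} I will note that $F_t^\top F_t=\sum_{i\in\mathcal{D}_t} f_if_i^\top$ and $F_t^\top Y_t=\sum_{i\in\mathcal{D}_t} f_iy_i^\top$ are sums over the multiset. They are therefore independent of the row ordering chosen by a centralized retrainer, so ``retraining from scratch'' is well defined. Retraining yields a deterministic parameter, so pointwise equality is the appropriate notion of exactness here, as discussed after \eqref{eq:goal}.

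\emph{Main obstacle.} The only step with real content is the hypothesis that the ledger \eqref{eq:ledger} truly holds the exact statistics of $\mathcal{D}_t$. If I want to justify this rather than assume it, I will argue by induction on $t$. The base case is $S_0=0$, $G_0=0$ for the empty set. For the inductive step, $\mathcal{D}_t=(\mathcal{D}_{t-1}\setminus\biguplus_k\mathcal{D}^-_{k,t})\uplus\biguplus_k\mathcal{D}^+_{k,t}$, and multiset additivity of $\sum_i f_if_i^\top$ and $\sum_i f_iy_i^\top$ gives the ledger update. Validity of deletions ($\mathcal{D}^-_{k,t}\subseteq\mathcal{D}_{k,t-1}$) is what makes the subtraction correct. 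Deterministic $\phi$, or cached features, guarantee that the subtracted $f_i$ equal the ones originally added. Everything else is routine.
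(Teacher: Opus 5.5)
Your proposal is correct and follows the same route as the paper. The paper argues that ridge regression depends on the data only through $(S_t,G_t)$, with unique optimum $(S_t+\gamma\I)^{-1}G_t$, and that the ledger maintains these statistics exactly. You fill in the details the paper's two-line proof leaves implicit: strict convexity for uniqueness, and the multiset-additivity induction (using valid deletions and deterministic or cached features) for the ledger.
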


\begin{proof}
Centralized ridge regression depends on the data only through $(S_t,G_t)$ and has
unique optimum~\eqref{eq:closedform}. The protocol maintains $(S_t,G_t)$ exactly
via the ledger update~\eqref{eq:ledger}, hence computes the same $W_t$.
\end{proof}

In machine unlearning terms, since both retraining and the protocol produce the same
deterministic $W_t$, the output distributions coincide pointwise: the unlearned model
is indistinguishable from one retrained from scratch on $\mathcal{D}_t$.

\subsection{Order and client-partition invariance}

\begin{theorem}[Order and partition invariance]
\label{thm:invariance}
Fix a final retained multiset $\mathcal{D}_t$. The statistics $(S_t,G_t)$ and hence
$W_t$ are invariant to (i) the order in which clients report updates, (ii) any
reassignment of samples across clients, and (iii) any interleaving of add/delete
operations that yields the same final retained multiset.
\end{theorem}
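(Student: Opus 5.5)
The plan is to reduce all three invariances to one identity: the ledger statistics equal the statistics of the final retained multiset. Concretely, for every $t$,
\[
S_t=\sum_{(f,y)\in\mathcal{D}_t} ff^\top,\qquad G_t=\sum_{(f,y)\in\mathcal{D}_t} fy^\top,
\]
where sums over a multiset count multiplicities. Once this holds, $(S_t,G_t)$ is a function of $\mathcal{D}_t$ alone. Theorem~\ref{thm:det_exact} then gives $W_t=(S_t+\gamma\I)^{-1}G_t=W^\star(F_t,Y_t)$, which depends only on $\mathcal{D}_t$. The invertibility of $S_t+\gamma\I$ comes from $\gamma>0$.

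The first step is the displayed identity, proved by induction on $t$. The base case is $S_0=0,G_0=0$ for the empty set. For the inductive step, note that $S^\pm_{k,t}=(F^\pm_{k,t})^\top F^\pm_{k,t}=\sum_{i}f_if_i^\top$ over the rows of the batch, and similarly for $G$. Summing over $k$ as in \eqref{eq:agg} gives the per-sample sum over $\biguplus_k\mathcal{D}^\pm_{k,t}$. The ledger \eqref{eq:ledger} then produces the sum over $(\mathcal{D}_{t-1}\uplus\mathcal{D}^+_t)\setminus\mathcal{D}^-_t$. This equals $\mathcal{D}_t$ because deletions are valid ($\mathcal{D}^-_{k,t}\subseteq\mathcal{D}_{k,t-1}$). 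Multiset subtraction of a valid sub-multiset corresponds exactly to subtracting its per-sample contributions.

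With the identity in hand, each of the three claims follows directly.
\begin{itemize}
\item For (i), the aggregation in \eqref{eq:agg} is a finite sum of matrices, so it is invariant to the order of the summands by commutativity and associativity of matrix addition.
\item For (ii), the identity expresses $(S_t,G_t)$ as a sum over the global multiset $\mathcal{D}_t=\biguplus_k\mathcal{D}_{k,t}$ with no reference to $k$, so any reassignment of samples across clients that keeps the global multiset fixed leaves $(S_t,G_t)$ unchanged.
\item For (iii), consider any sequence of operations, which is a word of adds and valid deletes. Unrolling the ledger gives $S_t=\sum_{\text{adds}}ff^\top-\sum_{\text{deletes}}ff^\top$. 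By additivity this equals the sum over (multiset of all added samples) minus (multiset of all deleted samples). Validity at every step guarantees this multiset difference is well defined and equals the final $\mathcal{D}_t$.
\end{itemize}

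The main point requiring care is (iii), specifically the multiset bookkeeping. The same sample may be added, deleted, and re-added, and duplicates may occur. I would handle this by working with multiplicity functions $m:\text{samples}\to\mathbb{Z}_{\ge0}$. Then $S=\sum_z m(z)f_zf_z^\top$ is linear in $m$, and every add or delete changes $m$ by $\pm1$ at some point. The final $m$ is determined by $\mathcal{D}_t$, and by linearity the statistics depend only on that final $m$. Validity ensures $m\ge0$ throughout, so intermediate states remain genuine datasets with $H\succ0$.

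All equalities are understood in exact arithmetic. Variant~B yields the same $W_t$ by its algebraic equivalence to $H_t^{-1}G_t$, assuming feasibility.
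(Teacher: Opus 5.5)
Your proposal is correct and takes essentially the same approach as the paper: both reduce (i)--(iii) to the fact that $(S_t,G_t)$ is a sum of per-sample terms $ff^\top$ and $fy^\top$ over the final retained multiset, after which commutativity and associativity of addition do the rest. Your induction on the ledger and the multiplicity-function bookkeeping for (iii) make explicit what the paper's one-line proof takes for granted, namely that valid deletions (with deterministic or cached features) subtract exactly the contributions that were previously added.
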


\begin{proof}
All three transformations preserve the final retained multiset. Since $(S_t,G_t)$
are sums over that multiset and addition is associative and commutative,
$(S_t,G_t)$ are unchanged, and so is $W_t=(S_t+\gamma\I)^{-1}G_t$.
\end{proof}

\subsection{Variant A vs Variant B}
\label{sec:variant_equiv}
\begin{theorem}[Equivalence of Variant A and Variant B in exact arithmetic]
\label{thm:equiv_AB}
Within each round $t$, suppose the aggregated round statistics admit exact
factorizations $S_t^+ = (U_t^+)^\top U_t^+$ and $S_t^- = (U_t^-)^\top U_t^-$.
Variant~B applies two sequential SMW steps per round: first an add step using $U_t^+$,
then a delete step using $U_t^-$, with the intermediate feasibility condition
$S_{t-1} + \gamma I + S_t^+ - S_t^- \succ 0$ ensuring the downdate is valid.
If Variant~B starts from the same initial $T_0=(S_0+\gamma I)^{-1}$ as
Variant~A and maintains $(S_t,G_t)$ exactly via~\eqref{eq:ledger},
then in exact arithmetic it holds for all $t$ that
\[
T_t^{B}=(S_t+\gamma I)^{-1} = T_t^{A}, \qquad W_t^{B}=T_t^{B}G_t = W_t^{A}.
\]
\end{theorem}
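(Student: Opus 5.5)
The plan is induction on $t$, with the joint hypothesis $T_{t-1}^B=(S_{t-1}+\gamma\I)^{-1}$ and $W_{t-1}^B=T_{t-1}^B G_{t-1}$. The base case $t=0$ holds by assumption, since $T_0^B=T_0^A$ and $S_0=G_0=0$ give $W_0=0$ in both variants. For the inductive step we treat the round as two SMW steps. We introduce the intermediate statistics $H_t^{+}\coloneqq S_{t-1}+\gamma\I+S_t^+$ and $G_t^{\mathrm{mid}}\coloneqq G_{t-1}+G_t^+$, and prove two claims in turn: $T_t^+=(H_t^+)^{-1}$ with $W_t^+=T_t^+G_t^{\mathrm{mid}}$, and then $T_t=(H_t^+-S_t^-)^{-1}=(S_t+\gamma\I)^{-1}$ with $W_t=T_tG_t$. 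By the ledger~\eqref{eq:ledger} and the Cholesky solve in Variant~A, the second claim is exactly $T_t^B=T_t^A$ and $W_t^B=W_t^A$.

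\emph{Add step.} The inverse part is the Woodbury identity applied to $H=S_{t-1}+\gamma\I\succ0$ and $\Delta S=(U_t^+)^\top U_t^+$. We check it directly by multiplying the right-hand side of \eqref{eq:smw_add} by $H+U^\top U$ and simplifying to $\I$. This uses only that $\I_r+UTU^\top$ is invertible, which holds because it is SPD since $T\succ0$. For the head, we substitute the inductive hypothesis $W_{t-1}=T_{t-1}G_{t-1}$ and $T_{t-1}=(T_t^+)^{-1}-\ldots$ into the update. The key identity is $(\I-T_t^+U^\top U)=T_t^+\bigl((T_t^+)^{-1}-U^\top U\bigr)=T_t^+T_{t-1}^{-1}$. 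With it,
\[
W_t^+=T_t^+T_{t-1}^{-1}T_{t-1}G_{t-1}+T_t^+G_t^+=T_t^+(G_{t-1}+G_t^+).
\]
This avoids any explicit Woodbury manipulation in the head formula.

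\emph{Delete step.} We apply the analogous argument with $H=H_t^+$ and $\Delta S=(U_t^-)^\top U_t^-$. By hypothesis, $H_t^+-\Delta S=S_t+\gamma\I\succ0$. We first show that this implies $\I_r-U T_t^+U^\top\succ0$, so the inverse in \eqref{eq:smw_del} exists. To do so, write $A=(H_t^+)^{1/2}$ and $V=UA^{-1}$. Then $A^{-1}(H_t^+-U^\top U)A^{-1}=\I-V^\top V\succ0$, which forces $\normtwo{V}<1$, hence $\I_r-VV^\top=\I_r-UT_t^+U^\top\succ0$. This feasibility point is the only genuinely non-routine step and is where I expect care to be needed. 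The rest is multiplying out to confirm $T_t=(H_t^+-U^\top U)^{-1}$, and then using $(\I+T_tU^\top U)=T_t\bigl(T_t^{-1}+U^\top U\bigr)=T_t(T_t^+)^{-1}$ to obtain
\[
W_t=T_t(T_t^+)^{-1}T_t^+G_t^{\mathrm{mid}}-T_tG_t^-=T_t(G_{t-1}+G_t^+-G_t^-)=T_tG_t.
\]

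Combining the two steps with the ledger, $T_t^B=(S_t+\gamma\I)^{-1}$ and $W_t^B=T_t^BG_t$. Variant~A's Cholesky solve of $H_tW=G_t$ returns the same unique solution, and by Theorem~\ref{thm:det_exact} both also agree with centralized retraining, which closes the induction. The same head identities also reconcile the form \eqref{eq:w_add}/\eqref{eq:w_del} with the form in Algorithm~\ref{alg:variantB}, since $\Delta G$ there plays the role of $G_t^\pm$.
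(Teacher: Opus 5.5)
Your proof is correct and follows the same route as the paper: induction on $t$, with the add-step SMW identity giving $(S_{t-1}+\gamma\I+S_t^+)^{-1}$ and the delete-step identity giving $(S_t+\gamma\I)^{-1}$. You also explicitly check that the incremental head updates of Algorithm~\ref{alg:variantB} yield $W_t^+=T_t^+(G_{t-1}+G_t^+)$ and $W_t=T_tG_t$, and that $\I_r-UT_t^+U^\top\succ0$ (the content of Lemma~\ref{lem:downdate_feasible}); these fill in details the paper's short proof leaves implicit (one small slip: ``$T_{t-1}=(T_t^+)^{-1}-\ldots$'' should read $T_{t-1}^{-1}=(T_t^+)^{-1}-U^\top U$).
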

\begin{proof}
By induction on $t$. After the add step, Variant~B holds
$\widetilde{T}_t = (S_{t-1} + \gamma I + S_t^+)^{-1}$ by the SMW identity.
After the subsequent delete step, it holds
\[
T_t = (\widetilde{H}_t - S_t^-)^{-1} = (S_{t-1} + \gamma I + S_t^+ - S_t^-)^{-1}
= (S_t + \gamma I)^{-1},
\]
again by the SMW identity. Since $G_t$ is maintained exactly
via~\eqref{eq:ledger}, it follows that $W_t^B = T_t^B G_t = W_t^A$.
\end{proof}

\subsection{Feasibility of exact downdates}

\begin{lemma}[Downdate feasibility]
\label{lem:downdate_feasible}
Let $H\succ 0$ and $\Delta S=U^\top U\succeq 0$. The following are equivalent:
(i) $H-\Delta S\succ 0$,\;
(ii) $\I-UH^{-1}U^\top\succ 0$,\;
(iii) $\lambda_{\max}(UH^{-1}U^\top)<1$.
\end{lemma}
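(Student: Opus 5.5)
The plan is to reduce everything to a congruence by the symmetric square root of $H$. Since $H\succ 0$, it has a unique symmetric positive definite square root $H^{1/2}$ with inverse $H^{-1/2}$. Write
\[
H-\Delta S = H^{1/2}\bigl(\I - H^{-1/2}U^\top U H^{-1/2}\bigr)H^{1/2}.
\]
Positive definiteness is preserved under congruence by an invertible matrix. Hence (i) is equivalent to $\I_d - M \succ 0$, where $M \coloneqq H^{-1/2}U^\top U H^{-1/2} = A^\top A$ and $A \coloneqq U H^{-1/2}\in\R^{r\times d}$.

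The key step is to transfer this condition to the $r\times r$ side. The matrices $A^\top A$ ($d\times d$) and $AA^\top = UH^{-1}U^\top$ ($r\times r$) have the same nonzero eigenvalues, since they share the singular values of $A$ squared. Both are also positive semidefinite, so all their eigenvalues are nonnegative. A symmetric PSD matrix $P$ satisfies $\I - P\succ 0$ exactly when $\lambda_{\max}(P)<1$, because the eigenvalues of $\I-P$ are $1-\lambda_i(P)$.

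Applying this fact to both matrices gives the chain:
\begin{itemize}
\item (i) holds iff $\lambda_{\max}(A^\top A)<1$;
\item $\lambda_{\max}(A^\top A)<1$ iff $\lambda_{\max}(AA^\top)<1$, which is (iii);
\item (iii) holds iff $\I_r - UH^{-1}U^\top\succ 0$, which is (ii).
\end{itemize}

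The only delicate point is the middle equivalence when one of the matrices has extra zero eigenvalues, for instance when $U=0$ or when $r\neq d$. In that case $\lambda_{\max}$ is $0$ for one matrix and equals the common largest nonzero eigenvalue, or $0$, for the other. Either way the two maxima coincide, because $\max(\text{nonzero spectrum}\cup\{0\})$ is the same for both. So the condition "$<1$" transfers without change, and I will note this edge case explicitly.

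Alternatively, one could check (i)$\Leftrightarrow$(ii) directly through a Schur complement of the block matrix $\begin{pmatrix} H & U^\top\\ U & \I_r\end{pmatrix}$. Taking Schur complements in the two orders gives $H - U^\top U$ and $\I_r - UH^{-1}U^\top$ respectively. Since $H\succ 0$ and $\I_r\succ 0$, the block matrix is positive definite iff either Schur complement is, which yields (i)$\Leftrightarrow$(ii) immediately. I would mention this as a cross-check, but rely on the square-root argument as the main proof, since it also gives (iii) directly.
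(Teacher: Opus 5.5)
Your proof is correct and is the paper's argument: congruence by $H^{1/2}$ reduces (i) to $\I - A^\top A \succ 0$ with $A = UH^{-1/2}$, and the shared nonzero spectrum of $A^\top A$ and $AA^\top = UH^{-1}U^\top$ then gives (ii) and (iii). Your edge-case sentence is slightly misphrased, since for $A \neq 0$ neither maximum is $0$ and both equal $\|A\|_2^2$, but the conclusion that the maxima coincide is right. The Schur-complement cross-check is also a valid route to (i)$\Leftrightarrow$(ii).
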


\begin{proof}
Write $H-\Delta S=H^{1/2}(\I-H^{-1/2}U^\top UH^{-1/2})H^{1/2}$ and apply the fact
that $A^\top A$ and $AA^\top$ share the same nonzero eigenvalues with
$A=UH^{-1/2}$.
\end{proof}

\subsection{Necessity of second-order information}
\label{sec:2nd_need}
\begin{lemma}[Second-order information is necessary for deterministic exactness]
\label{lem:need_second_order}
Any deterministic protocol that attempts to update $T=(S+\gamma I)^{-1}$ after an add event, but receives only first-order summaries
(e.g., $\sum_i f_i$ and $\sum_i y_i$ and label-weighted first moments), cannot be exact for all possible batches.
\end{lemma}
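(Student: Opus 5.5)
The plan is a fooling-set (indistinguishability) argument. A deterministic protocol is a function of its inputs. If two add batches produce identical first-order messages but require different exact outputs, then no protocol can be exact on both. Concretely, I would fix an arbitrary current state $T=(S+\gamma\I)^{-1}$, with the server's retained statistics identical in both scenarios. I would then exhibit two batches $\mathcal{B}_1,\mathcal{B}_2$ that agree on every first-order summary the server receives: $\sum_i f_i$, $\sum_i y_i$, and label-weighted first moments $\sum_i y_{ij} f_i$ for each label coordinate $j$, which is exactly $G=F^\top Y$. The batches should nonetheless have different Grams $\Delta S_1\neq \Delta S_2$. The deterministic server then outputs the same $T^+$ in both cases. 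But the correct answers $(S+\Delta S_1+\gamma\I)^{-1}$ and $(S+\Delta S_2+\gamma\I)^{-1}$ differ, because inversion is injective. So at least one output is wrong.

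The construction I would use works for $d\ge 1$:
\begin{itemize}
\item Take a nonzero $v\in\R^d$ and a fixed label $y_0$.
\item Let $\mathcal{B}_1=\{(v,y_0),(-v,y_0)\}$ and $\mathcal{B}_2=\{(0,y_0),(0,y_0)\}$.
\end{itemize}
The two batches have the same count (two samples each). They share $\sum f_i=0$ and $\sum y_i=2y_0$. The label-weighted moment is $\sum_i f_i y_i^\top = v y_0^\top - v y_0^\top = 0$ in both. Their Grams, however, are $\Delta S_1=2vv^\top$ and $\Delta S_2=0$. If one prefers to avoid zero features, an alternative is to split a point into two points symmetric about it, e.g.\ $\{(a+v,y_0),(a-v,y_0)\}$ versus $\{(a,y_0),(a,y_0)\}$. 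These match on all first moments, and their Grams differ by $2vv^\top$.

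To make the conclusion also about the head $W$, not just $T$, I would choose $y_0$ and the current state so that the two correct heads differ. For example, take $G\neq 0$ in the current state. The correct heads are $(S+\Delta S_i+\gamma\I)^{-1}(G+\Delta G)$, where $\Delta G$ is the common increment $2a y_0^\top$ in the shifted construction. These differ whenever $vv^\top$ acts nontrivially on $(S+\gamma\I)^{-1}(G+\Delta G)$. That condition can be arranged generically, for instance by taking $S=0$ and picking $v$ aligned with a nonzero column of $G+\Delta G$.

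The main obstacle is making precise what "receives only first-order summaries" means. The argument is only meaningful if the message is a function $m(\mathcal{B})$ that depends on the batch solely through linear, first-degree moments. I would formalize this as: $m$ factors through the map $\mathcal{B}\mapsto(|\mathcal{B}|,\sum f_i,\sum y_i,\sum f_i y_i^\top)$. The symmetric-split construction then suffices, since it holds every such quantity fixed while changing $\sum f_if_i^\top$. This is exactly the dependence on $S$ noted after~\eqref{eq:stats}.
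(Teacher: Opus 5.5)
Your proof is correct and uses the same indistinguishability argument as the paper. The paper exhibits the pair $\{e_1,e_2\}$ versus $\{(1,1)^\top,0\}$ in $\R^2$ with unit labels, which agree on all first moments but have Grams $\I$ versus the all-ones matrix; your symmetric split $\{a\pm v\}$ versus $\{a,a\}$ does the same job in any dimension $d\ge 1$, and your optional extension to $W$ has one small slip: in the shifted construction the heads differ exactly when $v^\top(S+2aa^\top+\gamma\I)^{-1}(G+\Delta G)\neq 0$, not with $(S+\gamma\I)^{-1}$, though your concrete choice of $v$ as a nonzero column of $G+\Delta G$ satisfies this anyway because the inverse is SPD.
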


\begin{proof}
In $\R^2$ with scalar labels $y_i\equiv 1$, consider batch A with $f_1=e_1,f_2=e_2$ and batch B with $f'_1=(1,1)^\top,f'_2=(0,0)^\top$.
Both have the same first-order sum $(1,1)^\top$, but their Gram matrices differ: $\Delta S_A=I$ and $\Delta S_B=\begin{psmallmatrix}1&1\\1&1\end{psmallmatrix}$.
Since the exact update depends on $\Delta S$, a protocol that cannot distinguish these cases cannot be exact on both.
\end{proof}

\subsection{Bayesian posterior certificate}
\label{sec:bayes}

The connection between ridge regression and Bayesian linear regression is classical:
ridge is the MAP estimator under a Gaussian likelihood and isotropic Gaussian prior.
We recall this connection here because it yields a stronger guarantee than
Theorem~\ref{thm:det_exact} at no additional cost: since the protocol maintains
$(S_t,G_t)$ exactly, it preserves not just the posterior mode but the \emph{entire
posterior distribution}. This directly implies a zero-KL unlearning certificate in
the sense of distributional exact unlearning, which is the standard notion in the
unlearning literature~\cite{cao2015towards,bourtoule2021machine}.

Concretely, assume
\[
y_i \mid f_i,W \sim \mathcal{N}(W^\top f_i,\sigma^2 I_c),\qquad
\mathrm{vec}(W)\sim \mathcal{N}(0,\tau^2 I_{dc}),
\]
independently across $i$, and set $\gamma=\sigma^2/\tau^2$.

\begin{lemma}[Posterior in matrix-normal form]
\label{lem:blr}
Let $S=F^\top F$ and $G=F^\top Y$. Then
\[
W\mid(F,Y) \sim \mathcal{MN}(M,\Sigma,I_c),
\quad
\Sigma=\sigma^2(S+\gamma I)^{-1},
\quad
M=(S+\gamma I)^{-1}G.
\]
\end{lemma}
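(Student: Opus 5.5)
The plan is to compute the log-posterior directly, as a function of $W$, and complete the square in $W$. By Bayes' rule, up to additive constants independent of $W$,
\[
\log p(W\mid F,Y) = -\frac{1}{2\sigma^2}\normF{Y-FW}^2 - \frac{1}{2\tau^2}\normF{W}^2 .
\]
Here I use two facts. First, independence across $i$ turns the product of Gaussian likelihoods $\mathcal{N}(W^\top f_i,\sigma^2 I_c)$ into the Frobenius norm $\sum_i\lVert y_i-W^\top f_i\rVert^2=\normF{Y-FW}^2$. Second, the prior on $\mathrm{vec}(W)$ contributes $\normF{W}^2/(2\tau^2)$.

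Factoring out $1/(2\sigma^2)$ and using $\gamma=\sigma^2/\tau^2$, the exponent becomes $-\frac{1}{2\sigma^2}\bigl(\normF{Y-FW}^2+\gamma\normF{W}^2\bigr)$. This is exactly the ridge objective~\eqref{eq:ridge} scaled by $-1/(2\sigma^2)$. Expanding with the trace,
\[
\normF{Y-FW}^2+\gamma\normF{W}^2=\mathrm{tr}\bigl(W^\top(S+\gamma\I)W\bigr)-2\,\mathrm{tr}(W^\top G)+\mathrm{tr}(Y^\top Y),
\]
which depends on the data only through $(S,G)$ apart from a constant. With $H=S+\gamma\I\succ0$ and $M=H^{-1}G$ (the closed form~\eqref{eq:closedform}), completing the square gives
\[
\mathrm{tr}\bigl((W-M)^\top H (W-M)\bigr) + \text{const},
\]
since $\mathrm{tr}(M^\top H W)=\mathrm{tr}(G^\top W)$.

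It remains to match this with the matrix-normal density. The density of $\mathcal{MN}(M,\Sigma,\Omega)$ is proportional to $\exp\bigl(-\tfrac12\mathrm{tr}[\Omega^{-1}(W-M)^\top\Sigma^{-1}(W-M)]\bigr)$. Taking $\Omega=I_c$ and $\Sigma^{-1}=H/\sigma^2$, that is $\Sigma=\sigma^2(S+\gamma\I)^{-1}$, gives exactly our exponent. Since a density is determined by its kernel up to normalization, and this kernel is integrable because $H\succ0$, the posterior is $\mathcal{MN}(M,\Sigma,I_c)$.

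As a cross-check, I would note the equivalent vectorized form: $\mathrm{vec}(W)$ is Gaussian with precision $I_c\otimes H/\sigma^2$, which is the Kronecker structure of the matrix normal.

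I do not expect any real obstacle; the argument is a standard conjugate computation. The only step needing care is bookkeeping the trace identities and the convention in $\mathcal{MN}$ of which covariance is the row covariance ($d\times d$) and which is the column covariance ($c\times c$). The columns of $W$ are independent with shared covariance $\Sigma$ because both the likelihood and the prior are isotropic across the $c$ outputs. I would verify this by writing the exponent as $\sum_{j=1}^c (w_j-m_j)^\top H (w_j-m_j)$, where $w_j$ and $m_j$ are the $j$-th columns of $W$ and $M$.
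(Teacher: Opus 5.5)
Your proposal is correct and follows the same route as the paper: the paper's proof is a one-line appeal to the standard conjugate Bayesian linear regression calculation, with the Kronecker structure giving the matrix-normal form. You carry that computation out explicitly, completing the square to get $\mathrm{tr}\bigl((W-M)^\top H(W-M)\bigr)$ and matching it to the $\mathcal{MN}(M,\sigma^2H^{-1},I_c)$ kernel (equivalently, precision $I_c\otimes H/\sigma^2$ on $\mathrm{vec}(W)$), and your trace identities and row/column covariance convention are right.
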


\begin{proof}
Standard Bayesian linear regression calculation; the matrix-normal form follows from
the Kronecker structure of the prior and likelihood.
\end{proof}

\begin{theorem}[Exact Bayesian federated continual unlearning]
\label{thm:bayes_exact}
Let $\Pi_t$ denote the Bayesian posterior given retained data $\mathcal{D}_t$.
The protocol maintains $(S_t,G_t)$ exactly and therefore maintains the exact posterior
\[
\Pi_t = \mathcal{MN}\!\left(W_t,\,\sigma^2(S_t+\gamma I)^{-1},\,I_c\right),
\]
which equals the posterior obtained by centralized recomputation from scratch on
$\mathcal{D}_t$. Consequently,
$\mathrm{KL}(\Pi_t\;\|\;\Pi^{\mathrm{retrain}}_t)=0$.
\end{theorem}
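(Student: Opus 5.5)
The plan is to reduce the statement to Lemma~\ref{lem:blr} combined with the exactness of the ledger, as in Theorem~\ref{thm:det_exact}. There are three steps.

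\textbf{Step 1: the retrained posterior.} First I will note that the centralized Bayesian posterior on $\mathcal{D}_t$ is the one given by Lemma~\ref{lem:blr}. Let $(F_t,Y_t)$ be the stacked features and labels of the retained multiset. Then
\[
\Pi^{\mathrm{retrain}}_t=\mathcal{MN}\!\left((S_t^\star+\gamma I)^{-1}G_t^\star,\;\sigma^2(S_t^\star+\gamma I)^{-1},\;I_c\right),
\]
where $S_t^\star=F_t^\top F_t$ and $G_t^\star=F_t^\top Y_t$. The key point is that this posterior depends on the data only through $(S_t^\star,G_t^\star)$, together with the fixed hyperparameters $\sigma^2$ and $\gamma$. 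It does not depend on the row ordering of $F_t$, on which client holds each sample, or on the history of additions and deletions. This is also visible from the likelihood: $\sum_i\|y_i-W^\top f_i\|^2$ expands as $\mathrm{tr}(Y^\top Y)-2\,\mathrm{tr}(W^\top G)+\mathrm{tr}(W^\top S W)$. The $\mathrm{tr}(Y^\top Y)$ term does not involve $W$, so it is absorbed into the normalizing constant.

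\textbf{Step 2: the protocol's statistics equal the true ones.} Next I will show by induction on $t$ that the ledger~\eqref{eq:ledger} gives $S_t=S_t^\star$ and $G_t=G_t^\star$.
\begin{itemize}
\item The base case is $S_0=G_0=0$, which corresponds to the empty set.
\item For the inductive step, each valid deletion batch is a sub-multiset of $\mathcal{D}_{t-1}$, and the retained multiset satisfies $\mathcal{D}_t=(\mathcal{D}_{t-1}\uplus\mathcal{D}_t^+)\setminus\mathcal{D}_t^-$.
\item Since $S^\star$ and $G^\star$ are sums of per-sample terms $f_if_i^\top$ and $f_iy_i^\top$ over a multiset, they are additive under $\uplus$ and subtractive under removal of a sub-multiset. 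Together with the aggregation~\eqref{eq:agg}, this closes the induction.
\end{itemize}
This step is essentially the content of Theorems~\ref{thm:det_exact} and~\ref{thm:invariance}, so I will cite them rather than redo it. By Theorem~\ref{thm:equiv_AB}, the conclusion also covers Variant~B, since it yields the same $T_t=(S_t+\gamma I)^{-1}$ and the same $W_t$.

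\textbf{Step 3: conclude zero KL.} Substituting into Step~1, the protocol's matrix-normal law $\mathcal{MN}(W_t,\sigma^2(S_t+\gamma I)^{-1},I_c)$ has the same mean and covariance parameters as $\Pi^{\mathrm{retrain}}_t$, so the two are the same distribution. The two parameters match for the following reasons.
\begin{itemize}
\item The mean matches because $W_t=(S_t+\gamma I)^{-1}G_t$.
\item The row covariance is nondegenerate since $S_t+\gamma I\succ 0$ when $\gamma>0$, so the law is a proper Gaussian on $\mathrm{vec}(W)$ with covariance $I_c\otimes\sigma^2(S_t+\gamma I)^{-1}$.
\end{itemize}
The KL divergence between identical distributions is $0$. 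If desired, one can check this against the closed-form Gaussian KL: the trace term equals $dc$, the quadratic term vanishes, and the log-determinant ratio is $0$.

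The only real obstacle is bookkeeping rather than analysis. The induction in Step~2 must be stated over multisets, so that duplicate samples and partial deletions are handled correctly, and it must use the validity assumption $\mathcal{D}^-_{k,t}\subseteq\mathcal{D}_{k,t-1}$. Beyond that, I expect no difficulty.
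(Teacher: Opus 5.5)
Your proposal is correct and follows the paper's route: Lemma~\ref{lem:blr} makes the posterior depend on the data only through $(S_t,G_t)$, the ledger keeps these statistics exact, so the matrix-normal parameters coincide and the KL divergence is zero. The explicit multiset induction, the appeal to Theorem~\ref{thm:equiv_AB} to cover Variant~B, and the closed-form Gaussian KL check add detail that the paper's short proof leaves implicit, without changing the argument.
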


\begin{proof}
By Lemma~\ref{lem:blr}, the posterior depends on the data only through $(S_t,G_t)$.
Since the protocol maintains these statistics exactly, it produces the same posterior
parameters $(M,\Sigma)$ as centralized recomputation, and identical distributions
imply zero KL divergence.
\end{proof}

The tracked inverse $T_t=(S_t+\gamma I)^{-1}$ is, up to $\sigma^2$, the posterior
covariance matrix, giving a natural monotonicity property: valid deletions can only
increase posterior uncertainty in the PSD order, while additions decrease it.

\subsection{Communication and computation costs}

\begin{table}[t]
\centering
\caption{Per-round cost summary in the typical regime $d\gg r$.}
\label{tab:costs}
\begin{tabular}{@{}lcc@{}}
\toprule
 & Communication per round & Server computation \\ \midrule
Variant A & $\Theta(d^2+dc)$ & $\mathcal{O}(d^3+d^2c)$ \\
Variant B & $\Theta(dr+dc)$ & $\mathcal{O}(rd^2+r^3)$ \\
\bottomrule
\end{tabular}
\end{table}

Table~\ref{tab:costs} summarizes per-round costs. Variant~A transmits full $d\times d$ Gram updates, so communication is $\Theta(d^2+dc)$ per client and server computation is dominated by the Cholesky solve at $\mathcal{O}(d^3)$. Variant~B transmits only the $r\times d$ QR factor $R_k$ and the $d\times c$ moment $G_{k,t}$, reducing communication to $\Theta(dr+dc)$; server computation is dominated by the SMW update at $\mathcal{O}(rd^2+r^3)$. When $r\ll d$, Variant~B offers substantial savings in both communication and computation.

\section{Experiments}
We compare the performance of our two ridge-head variants against the exact federated unlearning methods FATS \cite{FATS} and Exact-Fun \cite{exactfun}, using both central ridge retraining and retrain-from-scratch FedAvg \cite{mcmahan2017communication} as baselines. For FATS, Exact-Fun, and FedAvg retraining, we employ a frozen backbone with a single-layer linear head to ensure comparability with our ridge-head setup, and in each communication round the sampled clients locally optimize their head using a cross-entropy loss with standard backpropagation. We use the CIFAR-10, CIFAR-100 \cite{krizhevsky2009learning}, FeMNIST \cite{caldas2018leaf} and Sentiment140 \cite{go2009twitter} datasets for performance evaluation. For CIFAR-10, CIFAR-100 and Sentiment140, we construct non-IID client datasets using a Dirichlet distribution partitioning with $\alpha=0.5$. For FeMNIST, we form clients by subsampling the 3556 writer IDs in the dataset and assigning all samples from each selected writer to a single client. For the three image tasks, we adopt a pretrained DINOv2-ViT B/14 backbone \cite{oquab2023dinov2} with a feature embedding dimension of 768. For the sentiment analysis experiment, we use a RoBERTa backbone pretrained on Twitter data from the TweetEval benchmark \cite{barbieri2020tweeteval}, also with 768-dimensional feature embeddings. All federated learning simulations are carried out using the Flower federated learning framework \cite{beutel2020flower}. Experiments run simulations on a single server with 4 NVIDIA RTX A6000 GPUs. The details of the datasets and the backbone models used for experiments are summarized in Table \ref{tab:exp_setup}.

\begin{table}[t]
\centering
\caption{Experimental setup: datasets, tasks, frozen backbones, and feature
dimensions used in all evaluations.}
\label{tab:exp_setup}
\begin{tabular}{@{}llll@{}}
\toprule
Dataset & Task & Frozen backbone $\phi$ & Feature dim. $d$ \\ \midrule
CIFAR-10  & Image Classification & \makecell[l]{DINOv2} & 768 \\
CIFAR-100  & Image Classification & \makecell[l]{DINOv2} & 768 \\
FeMNIST & Character Classification & \makecell[l]{DINOv2} & 768  \\
Sentiment140 & Sentiment Analysis & RoBERTa & 768 \\
\bottomrule
\end{tabular}
\end{table}

\subsection{Baseline Performance}
We first report the performance of the server-side model trained using the proposed
methods in Table \ref{tab:server_accuracy_variants} to demonstrate that they are
effective while preserving utility and overall predictive performance. We set up the
experiments by running federated learning with 100 clients. For our two variants and
central ridge retrain, they are run as single-round procedures: all 100 clients
participate in that round, and all local samples are used to form the ridge
statistics/solution. Federated Retrain, Exact-Fun, and FATS all use iterative FedAvg
over multiple communication rounds, with FATS additionally employing a more selective
client and data sampling scheme to obtain a TV-stable procedure; the detailed
hyperparameters for all methods and datasets are given in Appendix and we choose them to closely follow the settings used in the corresponding prior work.

\begin{table}[t]
  \centering
  \caption{Server model test accuracy on the evaluated datasets and tasks}
  \label{tab:server_accuracy_variants}
  \begin{tabular}{lcccccc}
    \toprule
    Dataset &
    \makecell{Central ridge\\ retrain} &
    \makecell{FedAvg \\retrain} &
    FATS &
    Exact-Fun &
    Variant A &
    Variant B \\
    \midrule
    CIFAR-10    & 0.9802 & 0.9740 & 0.9634 & 0.9664 & 0.9801 & 0.9802 \\
    CIFAR-100   & 0.8640 & 0.8570 & 0.8623 & 0.8471 & 0.8640 & 0.8640 \\
    FeMNIST     & 0.8084 & 0.7624 & 0.7217  & 0.7562 & 0.8080 & 0.8080 \\
    Sentiment 140 &  0.8291 & 0.8261 & 0.8264 & 0.8190 & 0.8291 & 0.8291\\
    \bottomrule
  \end{tabular}
\end{table}

Overall, all methods achieve strong performance across the evaluated benchmarks, confirming that a simple linear probe on top of a powerful frozen backbone
such as DINOv2 can be sufficient for these tasks.
The primary objective-matched comparison is between the two proposed variants and
central ridge retraining: both variants match the centralized ridge solution up to
floating-point error on all four benchmarks, as expected from the exactness guarantees.
The FedAvg-based methods (FedAvg retrain, FATS, Exact-Fun) are included as task-level
federated baselines; however, because they optimize a cross-entropy objective via
iterative gradient-based training rather than the ridge objective, accuracy differences
relative to our variants should not be interpreted as isolating the effect of exactness alone.
In fact, the two variants generate results almost identical to the central ridge retrain,
which further highlights their invariance to the client data distribution.
We empirically measure this invariance to the client data distribution
on the FeMNIST dataset by computing the relative Frobenius deviation
$\|W_\text{fed}-W_c\|_F/\|W_c\|_F$ between the weight matrix $W_c$ from centralized
training and the corresponding weights $W_\text{fed}$ obtained under federated training
using our two variants with different number of clients, using $K \in \{10, 50, 100\}$.
For each choice of $K$, we partitioned the same pool of 100 writer IDs into $K$ clients
by assigning disjoint sets of writer IDs to clients, so that changing $K$ only changes
how many writer IDs each client holds rather than the underlying data.
We observe from Table~\ref{tab:frobenius_invariance_femnist} that both proposed variants
yield very small but non-zero deviations, indicating practical invariance to the number
of clients and their data partitions.
These residual deviations arise from floating-point truncation, most notably in the
aggregation of second-order statistics.
All experiments are implemented in PyTorch, with the DINOv2 backbone computing features
in its default fp32 precision.
To increase numerical accuracy and further reduce truncation error, the server computes
and aggregates second-order statistics in fp64 precision, which yields deviations on the
order of $10^{-9}$ compared to $10^{-3}$--$10^{-4}$ when using fp32 throughout for
second-order statistics.
This effect is especially pronounced for Variant~A, where forming and accumulating Gram
matrices involves summing large-magnitude entries, so fp64 computation substantially
improves numerical stability, which in turn helps the ridge solution remain
well-conditioned and makes the choice of $\gamma$ less sensitive and easier to tune for
good accuracy.

\begin{table}[t]
  \centering
  \caption{Relative Frobenius deviation $\|W_{\text{fed}}-W_c\|_F/\|W_c\|_F$ on
  \textsc{FeMNIST} for different numbers of clients $K$ and numerical precision used
  for server-side computation and aggregation of second-order statistics.}
  \label{tab:frobenius_invariance_femnist}
\begin{tabular}{l@{\hspace{1.1em}}c@{\hspace{1.1em}}c@{\hspace{1.1em}}c@{\hspace{1.1em}}c@{\hspace{1.1em}}c@{\hspace{1.1em}}c}
    \toprule
    & \multicolumn{3}{c}{Variant A} & \multicolumn{3}{c}{Variant B} \\
    \cmidrule(lr){2-4} \cmidrule(lr){5-7}
    & $K{=}10$ & $K{=}50$ & $K{=}100$ & $K{=}10$ & $K{=}50$ & $K{=}100$ \\
    \midrule
    fp32 & 5.55e-3 & 5.52e-3 & 5.05e-3 & 1.27e-4 & 1.26e-4 & 1.21e-4 \\
    fp64 & 1.47e-9 & 1.42e-9 & 1.42e-9 & 6.18e-10 & 1.37e-9 & 1.42e-9 \\
    \bottomrule 
  \end{tabular}
\end{table}

\subsection{Unlearning Performance}
We evaluate unlearning performance under two complementary scenarios: (i) chunked deletions, where we repeatedly remove large fractions ($20\%$ of the total data) of the training set and monitor both utility and latency, and (ii) burst single-point deletions, where we issue 200 isolated deletion requests and measure per-request unlearning time. In both settings, we run with 100 clients and cache features from the frozen backbone and restrict recomputation during unlearning to the linear head, since this cost is shared by all methods we compare.

\subsubsection{Large-Chunk Data Deletions}
\begin{figure}
\begin{tabular}{cc}
\includegraphics[width=0.49\linewidth]{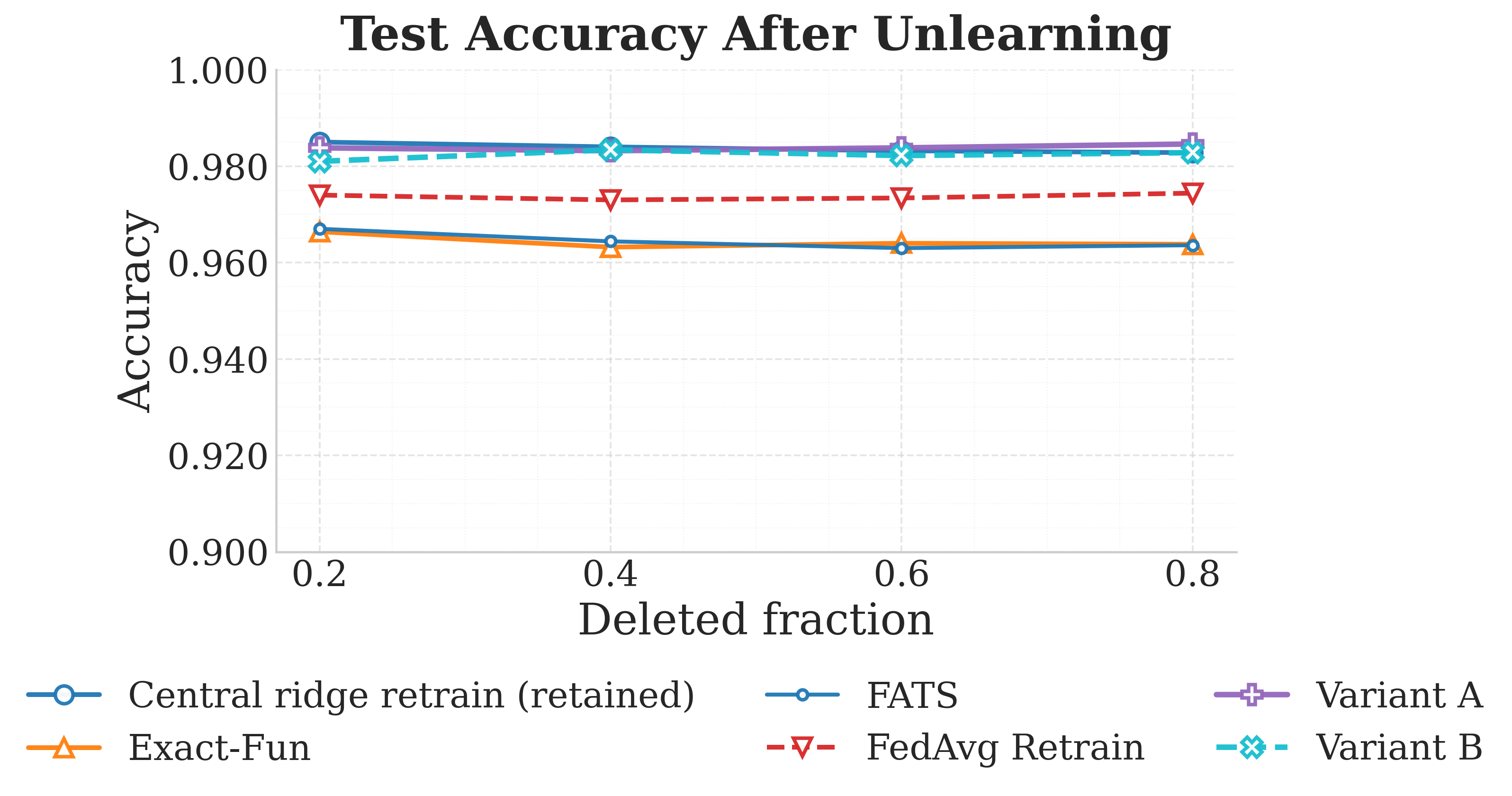} &   \includegraphics[width=0.49\linewidth]{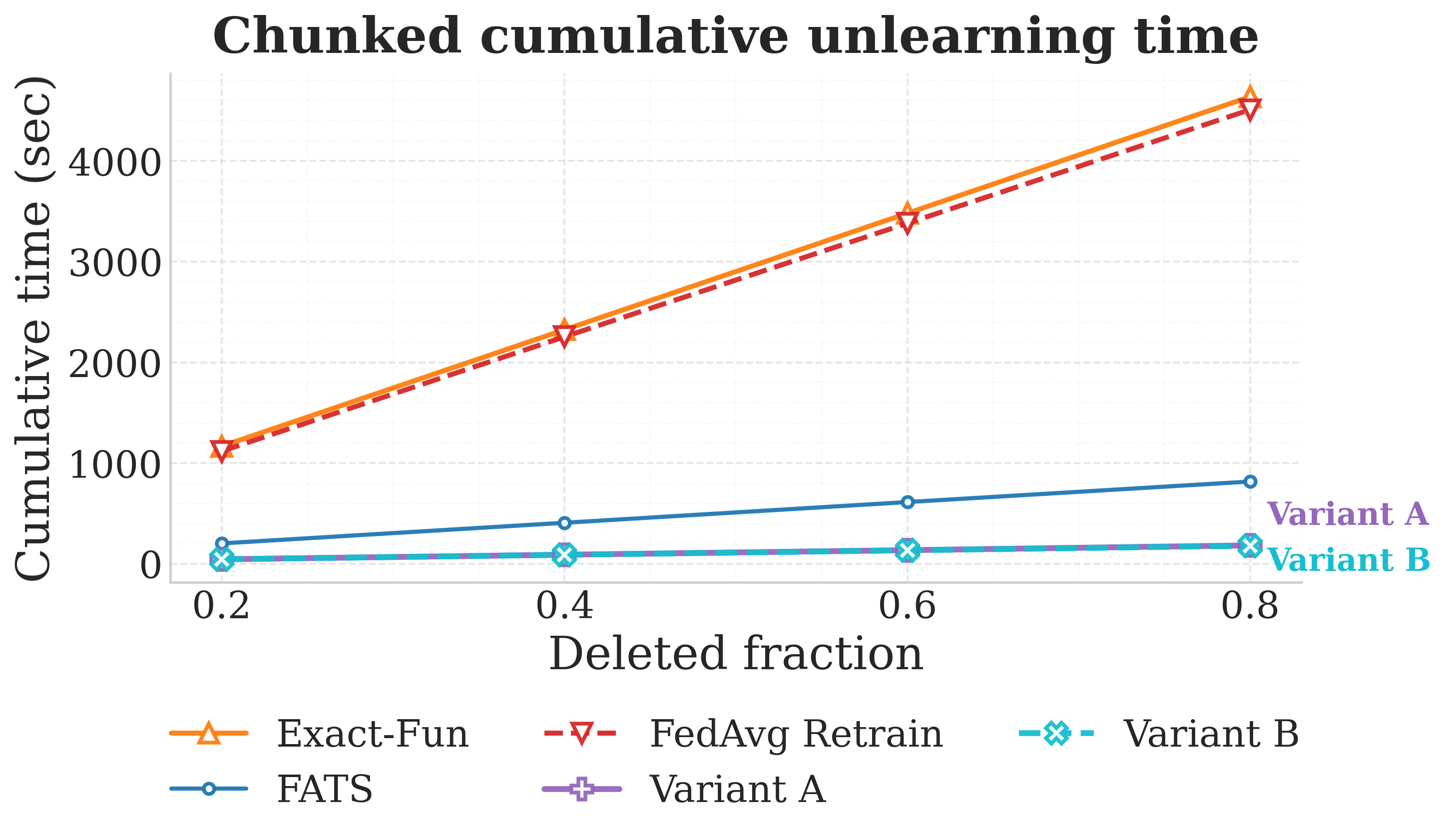} \\
    Test Accuracy & Cumulative time 
\end{tabular}
\caption{ Test Accuracy and Cumulative Time for CIFAR-10 Chunked Deletions. Unlearning performance on CIFAR-10 with repeated deletions of $20\%$ of the total data per step. Deleted samples are randomly selected from all clients. Left: test accuracy tracking utility across deletion rounds. Right: cumulative unlearning time demonstrating system efficiency.}
    \label{fig:chunk_deletion}
\end{figure}
For the chunked deletions in Figure \ref{fig:chunk_deletion}, both proposed variants remain close to the central ridge-retrain baseline across all rounds, exactly preserving the retained-data optimum and consistently outperforming the FedAvg-based baselines. In this large-chunk setting, the skip-retrain heuristics of Exact-Fun and FATS almost
never activate: large deletions trigger the quantized-parameter check
in Exact-Fun and make it likely that deleted samples were seen in the early training
rounds of FATS, forcing near-full retraining, further underscoring the robustness and efficiency of our two exact ridge-head variants in this regime.

The cumulative wall-clock time grows almost linearly with deletion rounds, even though the retained dataset shrinks and retraining would typically become cheaper, because orchestration overhead from synchronizing sampled clients and aggregating their messages dominates while client and server computation is comparatively minor. For Variant A, client-side computation of second-order statistics takes about 6.04s versus roughly 45s total unlearning time per chunk-deletion step, with most latency spent coordinating many clients in each large-chunk deletion round. Since each variant completes an unlearning step in a single communication round, this orchestration-dominated regime naturally favors them over multi-round FedAvg-style retraining, whose cumulative wall-clock time grows more quickly as deletions accumulate.

\subsubsection{Burst Deletions and Add-Back Continual Unlearning/Learning}
\begin{figure}
    \centering
     \includegraphics[width=.7\linewidth]{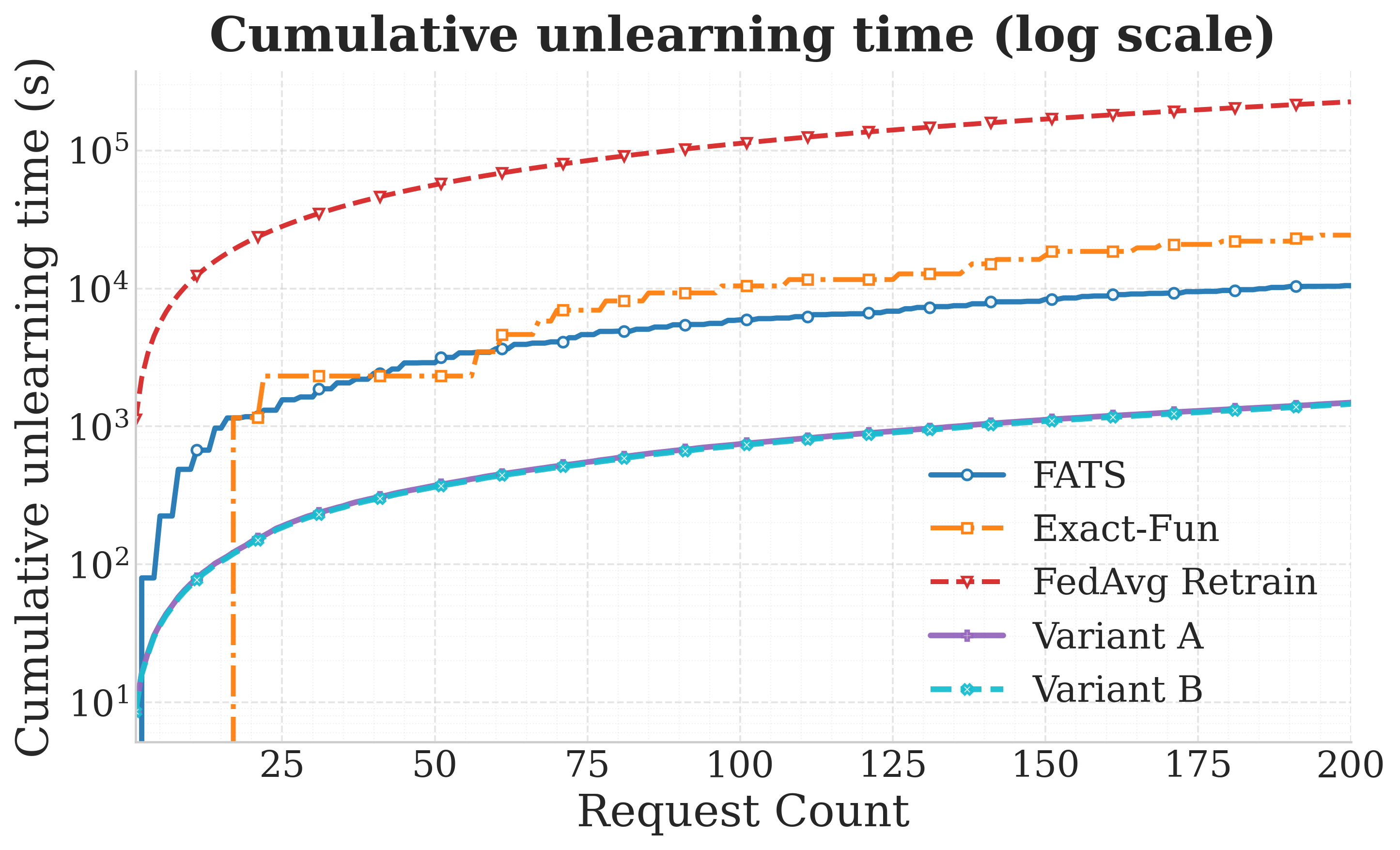}
    \caption{Wall time measurement of 200 unlearning requests (single data points) performed by different methods }
    \label{fig:burst}
\end{figure}

Figure \ref{fig:burst} reports the wall-clock time of all methods on a test of 200 consecutive single-point unlearning requests. This sparse request stream is particularly challenging for many unlearning methods when deletions arrive frequently \cite{tang2025acu} and directly probes how well each approach handles a continual deletion-only unlearning scenario. In this regime, the retraining-skipping mechanisms of Exact-Fun and FATS are effective, so a large fraction of single-point unlearning requests incur essentially zero additional latency. However, whenever retraining is triggered for FATS or Exact-Fun, the resulting partial retraining remains costly and requires the system to maintain multiple checkpoints in order to resume these retraining phases. By contrast, our two proposed variants, which do not require saving multiple checkpoints, incur only a small but consistent cost per deletion: each request involves only a single client–server communication round, and the required second-order statistics can be formed rapidly, yielding a low per-step overhead. As a result, the overall wall-clock cost of our method is orders of magnitude smaller than that of Exact-Fun, FATS, and FedAvg retraining baselines while avoiding any need to maintain model checkpoints. 

To further validate the exactness of the proposed variants over this sequence of 200 deletions and their support for continual learning, we additionally perform a reverse experiment that re-adds the previously deleted points to recover the same ridge solution, and we report the Frobenius deviation $\|W_\text{fed}-W_c\|_F/\|W_c\|_F$ from the central ridge solution after 100 and 200 steps. The resulting small deviations shown in Table \ref{tab:con_deviation} confirm both the numerical stability of the proposed method and its exactness under continual learning and unlearning process.

\begin{table}[t]
\centering
\caption{Relative Frobenius deviation
$\|W_{\text{fed}} - W_c\|_F / \|W_c\|_F$
from the central ridge solution after 100 and 200 single-point deletions and the corresponding add-back experiment on CIFAR-10.}
\label{tab:con_deviation}
\begin{tabular}{@{}lcccc@{}}
\toprule
& \multicolumn{2}{c}{Deletions only} & \multicolumn{2}{c}{Delete + add-back} \\
\cmidrule(lr){2-3} \cmidrule(lr){4-5}
Method & 100 steps & 200 steps & 100 steps & 200 steps \\
\midrule
Variant A & 2.72e-11 & 3.18e-11 & 3.14e-11 & 3.81e-11 \\
Variant B & 3.00e-11 & 3.66e-11 & 2.82e-11 & 5.10e-12 \\
\bottomrule 
\end{tabular}
\end{table}

\section{Discussion, limitations, and ethics}

We discuss the scope of the unlearning guarantee, privacy properties of the transmitted statistics, and numerical considerations for practical deployment.

\paragraph{Scope of unlearning.}
The protocol exactly removes the influence of deleted samples \emph{from the ridge head parameters}, given that the backbone features are fixed. If the backbone itself was trained on revocable data, removing head influence alone is not sufficient; backbone unlearning is an orthogonal problem and remains challenging in general.

\paragraph{Privacy considerations.}
The sufficient statistics $(S,G)$ are aggregate second-order summaries and do not expose individual raw samples, consistent with standard federated learning practice. However, second-order statistics can still leak information about the underlying feature distribution and label frequencies, particularly when $d$ is large and the backbone is shared and fixed across clients. Practical deployments can mitigate this by combining the protocol with secure aggregation and, where required, differential privacy mechanisms applied to the transmitted statistics.

\paragraph{Numerical stability.}
Variant~A (SPD solve) is numerically robust, especially when server-side Gram matrices and solves are computed in fp64 while the backbone runs in fp32. Variant~B (SMW downdates) is exact in exact arithmetic but can accumulate floating-point drift over long downdate sequences; computing SMW operations in fp64 substantially reduces this drift, and periodic resets to Variant~A provide a practical safeguard when accumulated updates become ill-conditioned.


\section{Conclusion}

We presented an exact federated continual unlearning protocol for a ridge head trained on frozen foundation-model features. By treating the ridge sufficient statistics as a retained-set ledger, the server supports arbitrary interleavings of add and delete requests and maintains a head that is, in exact arithmetic, pointwise identical to centralized retraining on the retained data. We provided deterministic retrain-equivalence guarantees, order and partition invariance, two practical server variants, and a Bayesian posterior certificate implying zero KL divergence between the unlearned and retrained posteriors.

This establishes a strong exact unlearning baseline for the increasingly common frozen backbone and lightweight head deployment regime. Experimentally, both variants match centralized ridge retraining in utility across all benchmarks; the FedAvg-based baselines are also competitive, though a direct accuracy comparison is confounded by their use of a different training objective. Exactness is preserved across different client partitions and throughout continual add--delete streams, and the dominant practical bottleneck is round orchestration rather than computation, making the single-round protocol particularly well suited to realistic federated deployments.


\section*{Acknowledgments}
This research was financially supported by Engineering and Physical Sciences Research Council (EPSRC), United Kingdom, [SUSTAIN Manufacturing Hub EP/S018107/1].\\
GM acknowledges support from a UKRI AI Turing Acceleration Fellowship (EPSRC EP/V024868/1).

\bibliographystyle{splncs04}
\bibliography{references}

\fi
\ifmainonly
\else

\newpage

\appendix
\section*{Appendix}
\setcounter{subsection}{0}
\renewcommand\thesubsection{A\arabic{subsection}}

\subsection{Experimental Hyperparameter and Sampling}
\label{app:exp_params}
For Exact-Fun, we use uniform parameter quantization with step size $\alpha = 0.08$, i.e., $q(W)=\text{round}(W/\alpha)\alpha$, applied only to the linear heads. On the dataset we test, smaller step sizes such as $\alpha=0.06$ made the linear head overly sensitive to deletions, so even single-point removals frequently triggered partial retraining, sacrificing precision without improving unlearning efficiency; we therefore choose $\alpha= 0.08$ to balance unlearning efficiency and performance.

The round of FedAvg simulations and client and data sampling rules are reported in Table \ref{tab:params} below.
\begin{table}[h]
  \centering
  \caption{The hyperparameter setting for baselines.}
  \label{tab:params}
  \begin{tabular}{lcccc}
  \toprule
  Method & Dataset & \#Rounds &
  \makecell{Clients\\per round} &
  \makecell{Local data\\per round} \\
  \midrule
  FedAvg retrain & CIFAR-10/100 & 120 & 20 & full local data \\
  Exact-Fun      & CIFAR-10/100 & 120 & 20 & full local data \\
  FATS           & CIFAR-10/100 & 60  & 5  & 64 samples\\
  FedAvg retrain & FeMNIST      & 120 & 20 & full local data \\
  Exact-Fun      & FeMNIST      & 120 & 20 & full local data \\
  FATS           & FeMNIST      & 120 & 5  & 50 samples\\
  FedAvg retrain & Sentiment140 & 100 & 20 & full local data \\
  Exact-Fun      & Sentiment140 & 100 & 20 & full local data \\
  FATS           & Sentiment140 & 100 & 5  & 300 samples \\
  \bottomrule
\end{tabular}
\end{table}

\subsection{Communication-efficient approximations and bounds}
Modern foundation models such as DINO and pretrained LLMs typically expose moderate feature dimensions (e.g., 768 or 1024), which our lightweight linear-head methods can handle directly. At the same time, they are also compatible with much higher-dimensional features via approximate updates. Transmitting full $\Delta S\in\R^{d\times d}$ can be expensive when $d$ is large (e.g., high-dimensional foundation features), so we introduce approximate Gram updates together with bounds that quantify the drift from exact retrain equivalence.

\subsubsection{Low-rank/sketched Gram updates}
Suppose a client approximates $\Delta S$ by a rank-$r$ matrix $\Delta S_r$ (e.g., truncated eigendecomposition, random projections,
or sketching) and transmits $\Delta S_r$ (or its factor) along with $\Delta G$.

Let $H=S+\gamma I$.
For an add update, define
\[
T_{\mathrm{ex}}=(H+\Delta S)^{-1},\qquad T_{\mathrm{ap}}=(H+\Delta S_r)^{-1},\qquad E=\Delta S-\Delta S_r.
\]
A standard resolvent argument yields the following perturbation bound.

\begin{theorem}[Add-update perturbation bound]
\label{thm:add_bound}
Assume $\normtwo{T_{\mathrm{ap}}E}<1$. Then
\[
\normtwo{T_{\mathrm{ex}}-T_{\mathrm{ap}}}
\le
\frac{\normtwo{T_{\mathrm{ap}}}^2\,\normtwo{E}}{1-\normtwo{T_{\mathrm{ap}}E}}.
\]
If $W_{\mathrm{ex}}=T_{\mathrm{ex}}(G+\Delta G)$ and $W_{\mathrm{ap}}=T_{\mathrm{ap}}(G+\Delta G)$, then
\[
\normtwo{W_{\mathrm{ex}}-W_{\mathrm{ap}}}
\le
\normtwo{T_{\mathrm{ex}}-T_{\mathrm{ap}}}\,\normtwo{G+\Delta G}.
\]
\end{theorem}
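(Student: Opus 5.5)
The plan is to use the resolvent identity, which writes the difference of two inverses as a product involving their difference. Since $H\succ 0$ and $\Delta S,\Delta S_r$ are (assumed) PSD, both $H+\Delta S$ and $H+\Delta S_r$ are invertible, so $T_{\mathrm{ex}}$ and $T_{\mathrm{ap}}$ are well defined. (For the bound itself, invertibility of $H+\Delta S$ also follows from the hypothesis $\normtwo{T_{\mathrm{ap}}E}<1$.) Because $(H+\Delta S)-(H+\Delta S_r)=E$, I will write
\[
T_{\mathrm{ap}}-T_{\mathrm{ex}} = T_{\mathrm{ap}}\bigl[(H+\Delta S)-(H+\Delta S_r)\bigr]T_{\mathrm{ex}} = T_{\mathrm{ap}}\,E\,T_{\mathrm{ex}}.
\]
This identity is the algebraic core of the argument. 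To verify it, multiply out: $T_{\mathrm{ap}}(H+\Delta S)T_{\mathrm{ex}}=T_{\mathrm{ap}}$ and $T_{\mathrm{ap}}(H+\Delta S_r)T_{\mathrm{ex}}=T_{\mathrm{ex}}$.

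The next step removes the unknown $T_{\mathrm{ex}}$ from the right-hand side. Rearranging the identity gives $T_{\mathrm{ex}}=(I+T_{\mathrm{ap}}E)^{-1}T_{\mathrm{ap}}$. The hypothesis $\normtwo{T_{\mathrm{ap}}E}<1$ guarantees that $I+T_{\mathrm{ap}}E$ is invertible, via a Neumann series, with
\[
\normtwo{(I+T_{\mathrm{ap}}E)^{-1}}\le \frac{1}{1-\normtwo{T_{\mathrm{ap}}E}}.
\]
Hence $\normtwo{T_{\mathrm{ex}}}\le \normtwo{T_{\mathrm{ap}}}/(1-\normtwo{T_{\mathrm{ap}}E})$. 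Substituting this into $\normtwo{T_{\mathrm{ex}}-T_{\mathrm{ap}}}\le\normtwo{T_{\mathrm{ap}}}\normtwo{E}\normtwo{T_{\mathrm{ex}}}$, using submultiplicativity of the spectral norm, yields exactly the stated bound
\[
\normtwo{T_{\mathrm{ex}}-T_{\mathrm{ap}}}\le \frac{\normtwo{T_{\mathrm{ap}}}^2\,\normtwo{E}}{1-\normtwo{T_{\mathrm{ap}}E}}.
\]

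The head bound is then immediate. We have $W_{\mathrm{ex}}-W_{\mathrm{ap}}=(T_{\mathrm{ex}}-T_{\mathrm{ap}})(G+\Delta G)$, and submultiplicativity of the operator $2$-norm gives the second inequality. The norm $\normtwo{G+\Delta G}$ is the spectral norm of a $d\times c$ matrix, and this causes no difficulty.

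The main point of care is the direction of the factorization. The hypothesis is stated in terms of $T_{\mathrm{ap}}E$, so I must solve for $T_{\mathrm{ex}}$ as $(I+T_{\mathrm{ap}}E)^{-1}T_{\mathrm{ap}}$ rather than as $T_{\mathrm{ap}}(I+ET_{\mathrm{ap}})^{-1}$. The latter form would involve $ET_{\mathrm{ap}}$, whose norm can differ from that of $T_{\mathrm{ap}}E$ because $E$ and $T_{\mathrm{ap}}$ do not commute. Choosing the first form lets the stated assumption apply directly.
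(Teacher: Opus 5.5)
Your proof is correct and is essentially the paper's argument: both rest on the factorization $T_{\mathrm{ex}}=(I+T_{\mathrm{ap}}E)^{-1}T_{\mathrm{ap}}$ and the Neumann bound $\normtwo{(I+X)^{-1}}\le 1/(1-\normtwo{X})$. The paper bounds $T_{\mathrm{ex}}-T_{\mathrm{ap}}=-(I+T_{\mathrm{ap}}E)^{-1}T_{\mathrm{ap}}ET_{\mathrm{ap}}$ directly, whereas you bound $\normtwo{T_{\mathrm{ex}}}$ first and substitute it into the resolvent identity, which gives the same estimate. One small correction to your closing remark: for the symmetric approximations $\Delta S_r$ considered here, $E$ and $T_{\mathrm{ap}}$ are symmetric, so $ET_{\mathrm{ap}}=(T_{\mathrm{ap}}E)^\top$ has the same spectral norm and the factorization $T_{\mathrm{ap}}(I+ET_{\mathrm{ap}})^{-1}$ would have worked equally well.
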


\begin{proof}
Write $T_{\mathrm{ex}}=(H+\Delta S_r+E)^{-1} = (I+T_{\mathrm{ap}}E)^{-1}T_{\mathrm{ap}}$ and expand
$T_{\mathrm{ex}}-T_{\mathrm{ap}}= - (I+T_{\mathrm{ap}}E)^{-1}T_{\mathrm{ap}}E T_{\mathrm{ap}}$.
Bound using $\normtwo{(I+X)^{-1}}\le 1/(1-\normtwo{X})$ when $\normtwo{X}<1$.
\end{proof}

Delete-update bounds are analogous when $H-\Delta S_r\succ 0$ and the perturbation is feasible.

\subsubsection{Periodic exact reset}
A practical design is to run many approximate steps and periodically \emph{reset} by recomputing $W_t$ exactly via Variant~A from the aggregated ledger $(S_t,G_t)$.
Theorems like \ref{thm:add_bound} quantify drift between resets in terms of the neglected spectral mass $\normtwo{E}$.

\fi

\end{document}